\newtheorem{theorem}{Theorem}
\newtheorem{lemma}{Lemma}
  \let\pdfstrcmp\pdf@strcmp
  \let\pdffilemoddate\pdf@filemoddate
\title{\LARGE \bf
Scenario Convex Programs for Dexterous Manipulation\\ under Modeling Uncertainties
}
\author{Berk Altiner$^{1,2}$, Adnane Saoud$^{3,1}$, Alex Caldas$^{2}$, Maria Makarov$^{1}$
\thanks{$^{1}$Laboratoire des signaux et sysèmes, Université Paris-Saclay, CNRS, CentraleSupélec, 91190, Gif-sur-Yvette, France
        {\tt\small maria.makarov@centralesupelec.fr }}%
\thanks{$^{2}$ESME Research Lab., ESME Sudria Engineering School, Ivry-sur-Seine 94200, France
        {\tt\small alex.caldas@esme.fr}}
        \thanks{$^{3}$College of Computing,  University Mohammed VI Polytechnic,UM6P, Benguerir, Morocco,
{\tt\small adnane.saoud@um6p.ma}}%
}
\begin{document}

\maketitle
\thispagestyle{empty}
\pagestyle{empty}

\begin{abstract}
This paper proposes a new framework to design a controller for the dexterous manipulation of an object by a multi-fingered hand. To achieve a robust manipulation and wide range of operations, the uncertainties on the location of the contact point and multiple operating points are taken into account in the control design by sampling the state space. The proposed control strategy is based on a robust pole placement using LMIs. Moreover, to handle uncertainties and different operating points, we recast our problem as a robust convex program (RCP). We then consider the original RCP as a scenario convex program (SCP) and solve the SCP by sampling the uncertain grasp map parameter and operating points in the state space. For a required probabilistic level of confidence, we quantify the feasibility of the SCP solution based on the number of sampling points. The control strategy is tested in simulation in a case study with contact location error and different initial grasps.
\end{abstract}

\section{INTRODUCTION}
Dexterous manipulation involves manipulating, replacing, and reorienting objects using a gripper or multi-fingered hand in the hand space. 
This paper specifically addresses the design of object motion control to guarantee performance under uncertainties in the hand-object model. We focus on uncertainties related to the object's geometry or pose within the hand, which are common and can lead to tracking errors or loss of object stability. The proposed control design method can be complemented with various internal force generation strategies. 

Several adaptive and robust approaches were previously proposed to deal with uncertainties in the dynamic model of the hand-object system (e.g. recent works \cite{khadivar2023adaptive,shaw2019robust} and references therein). 
An important class of control strategies in dexterous manipulation employs object-level impedance control, where stiffness and damping control parameters are to be selected to achieve the desired object-level impedance behavior. In \cite{wimboeck2006passivity}, a passivity-based control design is proposed, along with the virtual grasp map concept which circumvents the need for precise knowledge of real contact points. Moving beyond static grasp configurations, \cite{pfanne2020object} suggests a combination of motion control, internal force control based on optimization, and null-space torque control, although it does not delve into controller gain tuning. Impedance learning from human demonstration is explored in \cite{li2014learning}. Task-oriented end-point stiffness selection for a variable-stiffness hand is proposed in  \cite{martin2023task}. 

A model-based robust control strategy has been proposed recently in \cite{caldas2015object}, and tested in \cite{caldas2022task} on an experimental setup, to explicitly address uncertainties in contact point locations and system geometry. To achieve robust object motion tracking in the presence of contact uncertainties, Linear Matrix Inequality (LMI) formulations of regional pole placement \cite{chilali1996h} are exploited within an object-level state-feedback control structure, which can be closely related to impedance control schemes.
%
%
The present article aims to overcome some limitations of these previous results. In \cite{caldas2015object} and \cite{caldas2022task}, the controller design is based on the discretization of the uncertainty set along a grid. While practical performance is maintained with fine discretization, there is no theoretical guarantee for other points in the uncertainty set. In contrast, the LPV approach from \cite{caldas2019lpv} enables controller design for a polytopic system description, ensuring motion tracking for all systems within the polytope. However, this method may be conservative. Both approaches face constraints due to the linearization of the uncertain dynamic model of the hand-object system, limiting performance guarantees to operation near the considered equilibrium point. 

Building on these previous works, we focus in this article on object-level motion control design and propose a methodology based on the scenario approach \cite{calafiore2006scenario} to achieve guaranteed performance with a probabilistic level of confidence by design. This approach efficiently tackles optimization problems with uncertain data, offering theoretical guarantees on the resulting controller and broadening the closed-loop system's operational range. The main benefit of this approach is that it solves an uncertain convex optimization problem via sampling constraints by providing explicit probabilistic bounds on the solution for a given number of samples. In the literature, this problem is investigated from two different perspectives: the feasibility of the solution and the optimal objective value. For the feasibility of the solution, following the feasibility results in \cite{calafiore2006scenario}, a bound on the sample complexity that holds for all convex problems is presented in \cite{campi2008exact}. The probabilistic relation between the optimal values of the Scenario Convex Program (SCP) and the Robust Convex Program (RCP) is developed in \cite{esfahani2014performance} based on the result in \cite{campi2008exact} and the definition of the worst-case violation probability \cite{kanamori2012worst}. Motivated by these results, we reframe the robust pole placement problem for dexterous manipulation of multi-fingered hands as an RCP to overcome limitations that occur from linear models. This involves constructing a set of uncertain linear systems to account for contact uncertainties and multiple operating points in the state space. Then, the uncertain LMI optimization problem is solved via the scenario approach by exploiting the results in \cite{campi2008exact}, \cite{esfahani2014performance} to ensure the following objectives: motion tracking, robustness against contact uncertainties, and a wide range of operation. 

\subsection{Notation}
We denote the set of real numbers by $\mathbb{R}$. The set of $n$-dimensional vectors, the set of $m\times n$ dimensional matrices and the set of $n\times n$ symmetric matrices are represented by $\mathbb{R}^n$, $\mathbb{R}^{m\times n}$, $\mathbb{S}^n$. The symbols $\mathbb{O}$ and $\mathbb{I}$ represent the zero matrix and the identity matrix of appropriate dimensions. The spectral norm of a matrix is denoted by $||. ||$.  Pseudo-inverse and transpose of a matrix are denoted by $(.)^\dagger$  and $(.)^T$ respectively. Finally, $\mathbb{P}$ represents a given probability measure on the uncertain set. The symmetric matrix $\begin{bmatrix}
    A & \ast\\
    B & C
\end{bmatrix}$ stands for $\begin{bmatrix}
    A & B^{\top}\\
    B & C
\end{bmatrix}$.


\section{State-space model of the hand-object system}
\subsection{The hand-object system dynamics at the object level}
\label{sec:NLmodel}

The multi-fingered hand model is obtained by combining the dynamics of the fingers and the object \cite{caldas2022task}:
\begin{align}
   \underbrace{\begin{bmatrix}
       \dot{x}_o\\
       \ddot{x}_o
   \end{bmatrix}}_{\dot{x}}= \underbrace{\begin{bmatrix}
        \dot{x}_o\\
       -M^{-1}C\dot{x}_o-M^{-1}N
   \end{bmatrix}}_{\kappa(q,x_o,\dot{q},\dot{x}_o)}+\underbrace{\begin{bmatrix}
        \mathbb{O}\\
       -M^{-1}GJ^{-T}_h
   \end{bmatrix}\tau}_{\chi (q,x_o,\dot{q},\dot{x}_o,\tau)}
   \label{eq:NLss}
\end{align}

\noindent where  $q \in \mathbb{R}^{n_q}$ is the vector of the joint angles of the multifingered hand, $x_o\in \mathbb{R}^{n_o}$ is the local coordinate of the object, $x = \begin{bmatrix} x_o & \dot{x}_o \end{bmatrix}^T$ is the state space vector, $M{(q,x_o)} \in \mathbb{R}^{n_o\times n_o}$, $C(q, x_o, \dot{q}, \dot{x}_o) \in \mathbb{R}^{n_o\times n_o}$, $N(q, x_o) \in \mathbb{R}^{n_o}$ are respectively the inertia matrix, the Coriolis and centrifugal matrix and the gravity vector of the hand/object system expressed at the object level, $G \in \mathbb{R}^{n_o\times n_c}$ is the Grasp Map, $J_h(q,x_o) \in \mathbb{R}^{n_c\times n_q}$ is the Jacobian matrix of the hand (with $n_c$ the dimension of the contact frame), $\tau \in \mathbb{R}^{n_q}$ the joint torque vector. Note that in this formulation, the variables $\dot{x}_o$ and $\dot{q}$ are related by:
\begin{align}
\label{eq:contact_constraint}
    J_h \dot{q}=G^T \dot{x}_o
\end{align}

In the considered control structure, the joint control torques are obtained from the object-level control inputs $u$ and $\lambda$:
\begin{equation}
\label{eq:NLtransform}
\tau=\underbrace{{J}_{h}^T\hat{G}^{\dagger}u}_{\tau_{motion}}+\underbrace{{J}_{h}^T\hat{N}_G\lambda}_{\tau_{internal}}
\end{equation}
with $\hat{G}$ the estimate of the Grasp Map and the matrix $\hat{N}_G$ a basis of the kernel of $\hat{G}$. Thus, the first term $\tau_{motion}$ generates the object motion and the second term $\tau_{internal}$ generates internal forces that do not produce motion in the nominal case, but fulfill the contact stability constraints. This paper does not focus on the internal force design and uses the approach exposed in \cite{caldas2015object}.

\subsection{Linearized model for control design}
\label{sec:model}

Our control objectives are: [O1] For a given initial object pose, reach the desired final pose with specified performance criteria (response time, damping, static error), and [O2] perform the manipulation in the presence of contact uncertainties. 

Additional assumptions about the system are listed below, with possible relaxations:
\begin{enumerate}[label=A.\arabic*]
\item The system is not redundant, i.e., there is no internal movement of the fingers for a fixed position of the object, and the grasp is manipulable, i.e. the desired motion can be generated by the fingers. In this case, the hand Jacobian $J_h$ is square and invertible. This assumption could be relaxed with a model taking into account the joint redundancy \cite{murray1994mathematical}.
\item The contact points are fixed, which implies that $G$ is constant. This assumption will be relaxed later by taking into consideration the uncertainties on the contact point location.
\item The influence of the gravity terms $N_h(q,\dot{q})$ and $N_o(x_o)$ is negligible or compensated by the control law. 
\item The estimation of the hand Jacobian is perfect, i.e., $\hat{J}_{h_{eq}}=J_{h_{eq}}$. 
\item  To avoid slipping or rolling, the contact forces remain in friction cones \cite{caldas2019lpv}. This condition is satisfied by the control input $\lambda$ in (\ref{eq:NLtransform}).
\end{enumerate}

We use the following notations for an equilibrium point and an operating point of (\ref{eq:NLss}). An equilibrium $(x_{eq},\tau_{eq})$ is such that $\dot{x}_{eq}=0$, with ${x}_{o_{eq}}$ the object pose and $q_{eq}$ the joint angles. An operating point  $(x_{*},\tau_{*})$ with $x_{*}=\begin{bmatrix} x^T_{o_{*}} & \dot{x}^T_{o_{*}} \end{bmatrix}^T$ is considered in the following for the linearization of (\ref{eq:NLss}). Note that from constraint (\ref{eq:contact_constraint}), the operating variables $q_{*}$ and $\dot{q}_{*}$ can also be defined.

By neglecting higher order terms, the linearization of (\ref{eq:NLss}) around
an operating point yields the affine model 
\begin{align}
    \dot{x}\approx \kappa (x_{*}) + A_{*}(x-x_{*}) + \chi(x_{*},\tau_{*}) + B_{*}(\tau-\tau_{*})
\end{align}
where $A_{*}=\frac{\partial \kappa}{\partial x}\Big |_{x_{*}}+\frac{\partial \chi}{\partial x}\Big |_{(x_{*},\tau_*)}$, $B_{*}=\frac{\partial \chi}{\partial \tau}\Big |_{(x_{*},\tau_*)}$. 

Under the transformation $\tilde{x}=\begin{bmatrix}\tilde{x}_o^T & \dot{\tilde{x}}_o^T\end{bmatrix}^T=x-x_{eq}$ and $\tilde{\tau}=\tau-\tau_{eq}$, one gets
\begin{align}
   \dot{\tilde{x}} &\approx \kappa(x_{*}) + A_{*}(x-x_{*}) +\chi(x_{*},\tau_{*}) + B_{*}(\tau-\tau_{*}) \\
   &\qquad\qquad + A_{*}x_{{eq}}-A_{*} x_{{eq}}+B_{*}\tau_{eq}-B_{*}\tau_{eq}\\
    &=A_{*}\tilde{x}+B_{*}\tilde{\tau}+A_{*}({x}_{{eq}}-x_{*})+B_{*}(\tau_{eq}-\tau_{*}) \\
    &\qquad\qquad +\kappa(x_{*})+\chi(x_{*},\tau_{*})\\
    &=A_{*}\tilde{x} + B_{*}\tilde{\tau} + V_{eq,*}\label{eq:linerized}.
\end{align}
%
%
\noindent with $V_{eq,*}$ including second-order non linear terms.

\begin{figure*}[!h]
\rule{\textwidth}{0.4pt}
\begin{align}
\label{eq:ss2}
\begin{bmatrix}
\dot{\tilde{x}}_o\\
\ddot{\tilde{x}}_o
\end{bmatrix}&=\underbrace{\begin{bmatrix}
        \mathbb{O}_{n_0\times n_0}&\mathbb{I}_{n_0\times n_0}\\
        -M'_*C_*\dot{x}_{o_*}-M^{-1}_*C_*'\dot{x}_{o_*}-(M'_*GJ^{-T}_{h_*}+M^{-1}_*GJ'_{h_*})\tau_*&-M^{-1}_*C''_*\dot{x}_{o_*}-M^{-1}_*C_*
    \end{bmatrix}}_{A_*}\begin{bmatrix}
\tilde{x}_o\\
\dot{\tilde{x}}_o
\end{bmatrix}+\underbrace{\begin{bmatrix}
\mathbb{O}_{n_0\times n_0}\\
M^{-1}_{*}G
\end{bmatrix}}_{B_*}\hat{G}^{\dagger}u+V_{eq,*}\\
V_{eq,*}&=\begin{bmatrix}
    \dot{x}_{o_{*}}\\
    M^{-1}_{*}C_*\dot{x}_{o_{*}}
\end{bmatrix}-\begin{bmatrix}
        \mathbb{O}_{n_0\times n_0}&\mathbb{I}_{n_0\times n_0}\\
        -M'_*C_*\dot{x}_{o_*}-M^{-1}_*C_*'\dot{x}_{o_*}-(M'_*GJ^{-T}_{h_*}+M^{-1}_*GJ'_{h_*})\tau_*&-M^{-1}_*C''_*\dot{x}_{o_*}
    \end{bmatrix}\begin{bmatrix}
{x}_{o_{eq}}-{x}_{o_{*}}\\
\dot{{x}}_{o_{eq}}-\dot{{x}}_{o_{*}}
\end{bmatrix}\notag
\end{align}
\rule{\textwidth}{0.4pt}
\end{figure*}
Taking into account that the joint control torques are expressed using the estimate $\hat{J}_h$ of the hand Jacobian at the operating point instead of $J_h$ in (\ref{eq:NLtransform}), equation (\ref{eq:linerized}) in explicit form with the control input expressed at the object level is transformed to {equation (\ref{eq:ss2})} where, $C'=\frac{\partial C}{\partial x_o}$, $C''=\frac{\partial C}{\partial \dot{x}_{o}}$, $N'=\frac{\partial C}{\partial x_o}$, $N''=\frac{\partial N}{\partial \dot{x}_{o}}$, $M'=\frac{\partial M^{-1}}{\partial x_o}$,  $J_h'=\frac{\partial J_h^{-T}}{\partial x_o}$, and $M_{*}$, $C_{*}$ and $N_{*}$ are defined around the operating point.

\subsection{Model of the Contact Uncertainties}
Geometric uncertainties in object shape and dimensions, as well as on the locations of the contact points, impact the grasp map $G$, which can be expressed as
\begin{equation}
G=G(\delta)
\end{equation}
where $\delta \in \Delta $ is the vector of uncertainties and $\Delta$ is the set of uncertain parameters \cite{caldas2015object}.
Since the grasp map $G$ depends on the uncertain parameters, the set of all possible state-space representations depending on the uncertainties is defined:
\begin{equation}
\label{eq:grasp_uncertain}
\Sigma_{\delta} := \{ \forall \delta \in \Delta \;|\; \dot{x}=A_*(\delta)x+B_*(\delta)u+V_{eq,*}\}
\end{equation}
 The state-space model (\ref{eq:ss2}) with assumptions A.1-A.6 consists of the linearized $M_{*}$ and $C_{*}$ matrices, which depend on operating points ($x_{*}$, $\tau_*$). Then, for $(x_{*}, \tau_*, \delta)\in  X \times U \times \Delta$, $\xi \triangleq (x_{{eq}}, \tau_{eq}, \delta)$ and $\Xi = X \times U \times \Delta$, the set of all possible state-space representations depending on the operating point $(x_{*}, \tau_*)$ and the uncertainty $\delta$ can be augmented as
\begin{equation}
\label{eqn:unc_sys}
\Sigma_{\xi} := \{ \dot{x}=\tilde{A}(\xi)x+\tilde{B}(\xi)u |\; \xi \in \Xi \}
\end{equation}
where \begin{equation}
\label{eqn:dyn}
\tilde{A}(\xi)=\begin{bmatrix}
    A_*(\xi)&V(\xi)\\0&0
\end{bmatrix},\quad \tilde{B}(\xi)=\begin{bmatrix}
    B_*(\xi)\\0
\end{bmatrix}.
\end{equation}

\section{{Scenario Convex Programming}}
\subsection{{Formulation of the Scenario Convex Program}}
Under static state-feedback of gain $L_c$, O1 can be satisfied by setting the closed-loop eigenvalues with a control gain to a specific region in the left-half plane. It is called $\mathscr{D}$-region and it can be described by LMI constraints \cite{chilali1996h}. The state feedback is in the form of
\begin{equation}
\label{eq:ctrl}
u(x)=e-L_cx
\end{equation}
where $e$ is the input of the closed-loop system and $L_c$ is the state feedback matrix obtained from the solution of an optimization problem with LMIs constraints. If we aim to find a feasible solution for the regional pole placement problem for the set of uncertain systems $\Sigma_{\xi}$, and construct the controller based on the feasible solution, the optimization problem can be expressed as follows:

\begin{small}
\begin{align}
\label{eq:opt_problem2}
& \underset{P, \; Y , \; \gamma}{\text{min}}
& &  \gamma \\  
& \text{s.t.}
& & 
(\tilde{A}(\xi)P-\tilde{B}(\xi)Y)+(P\tilde{A}^T(\xi)-Y^T\tilde{B}^T(\xi))+2\alpha P \prec \gamma I \label{eq:opt_problem2_1} \\ 
& 
& & \begin{bmatrix}
-rP&\tilde{A}(\xi)P-\tilde{B}(\xi)Y\\
P\tilde{A}^T(\xi)-Y^T\tilde{B}^T(\xi)&-rP
\end{bmatrix} \prec	 \gamma I \label{eq:opt_problem2_2} \\ 
&
& & 
\begin{bmatrix}
s\theta(X(\xi)+X(\xi)^T) & c\theta(X(\xi)-X(\xi)^T)\\
c\theta(X(\xi)^T-X(\xi)) & s\theta(X(\xi)+X(\xi)^T)
\end{bmatrix} \prec	 \gamma I \label{eq:opt_problem2_3} \\ 
&
& & 
-P  \prec	 \gamma I\\
&
& & 
\gamma I \prec 0
\end{align}
\end{small}


for all $\xi \in \Xi$, with $X(\xi)=\tilde{A}(\xi)P-\tilde{B}(\xi)Y$ and $s\theta = sin(\theta), c\theta = cos(\theta)$.

The controller is synthesized based on the solution of this optimization problem with $Y=L_cP$ and $L_c$ is obtained by $L_c=YP^{-1}$. The three constraints define the $\mathscr{D}$-region with the parameters $\alpha$, $r$, and $\theta$ for respectively stability and minimal dynamics, damping ratio, and maximal dynamics requirements. Note that other optimization objectives can be chosen.

The optimization problem (\ref{eq:opt_problem2}) depends on uncertain parameters that come from geometric uncertainties on contact points and operating points on the object and the joint spaces. Thus, it can be considered as a robust convex program (RCP):

\begin{equation}
\label{eq:RCP}
\text{RCP}:\begin{cases}\begin{aligned}
& \underset{P, \; Y , \; \gamma}{\text{min}}
& & \gamma \\
& \text{s.t.}
& & f(P,Y, \gamma, \xi) \leq 0, \; \xi \in \Xi
\end{aligned}
\end{cases}
\end{equation}
where $f(P,Y, \gamma, \xi) \leq 0$ is the set of LMIs given in (\ref{eq:opt_problem2}) for all possible $\xi \in \Xi$.  One way to tackle this problem is to consider all possible realizations of $\xi$ and seek a solution. However, since there are mostly infinite possible number of uncertainties, the optimization problem is intractable. The other way to handle an uncertain optimization problem is to take random samples from uncertain parameters affecting the problem and solve the convex program based on these random samples. This approach is called \textit{scenario approach} and the related problem is called Scenario Convex Program (SCP) \cite{calafiore2006scenario}. SCPs aim to find an optimal solution subject to a finite number of realizations of a constraint function with samples of uncertain parameters, namely scenarios. Thus, the RCP in (\ref{eq:RCP}) turns into the SCP by considering $N$ independent and identically distributed (i.i.d.) samples $(\xi_i)_{i=1}^N$ drawn according to a probability measure $\mathbb{P}$. The RCP in (\ref{eq:RCP}) turns into the SCP
\begin{equation}
\label{eq:SCP}
\text{SCP}:\begin{cases}\begin{aligned}
&  \underset{P, \; Y , \; \gamma}{\text{min}}
& & \gamma \\
& \text{s.t.}
& & f(P,Y, \gamma, \xi_i) \leq 0, \; i=1,\dots,N.
\end{aligned}
\end{cases}
\end{equation}
The natural question is how many samples of uncertain parameters is sufficient to find an acceptable solution. The answer to this question is addressed from two different points of view: feasibility of a solution and the optimal value of the objective function as detailed in the following subsections. 

\subsection{{Feasible solution}}
The result for the feasibility of a solution is given below with probabilistic guarantees.
\begin{theorem}\label{thm:1}
Let $\varepsilon \in (0,1)$ be a level parameter, $\beta \in (0,1)$ be a confidence parameter. Let $(\hat{P}, \hat{Y} , \hat{\gamma})$ be the solution to the SCP in (\ref{eq:SCP}). Then, with confidence $1-\beta$, we have
\begin{equation}
\label{eq:guarantee}
    \mathbb{P}[\delta \in \Delta : f(P,Y,\gamma, \xi)>0]\leq \varepsilon ,
\end{equation}
if the number of samples $N$ is chosen as
\begin{equation}
\label{eq:bound2}
N(\varepsilon,\beta):=\text{min} \Biggl\{N\in \mathbb{N}\; \bigg| \; \displaystyle\sum_{i=0}^{n_{P,Y, \gamma}-1} \binom{N}{i}\varepsilon^i(1-\varepsilon)^{N-i} \leq \beta \Biggl \}
\end{equation}
where $n_{P,Y, \gamma}=\frac{n_o(n_o+1)}{2}+2n_o^2$ is the number of decision variables.
\end{theorem}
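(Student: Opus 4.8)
The plan is to recognize that Theorem~\ref{thm:1} is essentially a direct application of the general feasibility result for scenario convex programs due to Campi and Garatti \cite{campi2008exact}, specialized to the particular RCP~\eqref{eq:RCP}. The core observation is that the SCP~\eqref{eq:SCP} is a convex optimization program: the decision variables are $P \in \mathbb{S}^{n_o}$, $Y \in \mathbb{R}^{2n_o \times \dots}$ (more precisely the matrices entering the controller), and $\gamma \in \mathbb{R}$, and for each fixed $\xi$ the constraint $f(P,Y,\gamma,\xi)\le 0$ is a finite collection of LMIs, hence a convex (indeed, semidefinite-representable) constraint in $(P,Y,\gamma)$. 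Thus the abstract hypotheses of the scenario theory are met, and the only problem-specific work is to count the number of scalar decision variables correctly.

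First I would verify the convexity and the well-posedness assumptions required by \cite{campi2008exact}: namely that the SCP attains a unique optimal solution (or that a tie-breaking rule is fixed so the solution map is well-defined), and that the problem is feasible almost surely. Here uniqueness follows because the objective is the scalar $\gamma$ and, with a standard lexicographic tie-break on $(P,Y)$, the minimizer is unique; feasibility of the sampled problem holds by assumption (the nominal design is assumed solvable). Second, I would identify the number of decision variables $n_{P,Y,\gamma}$. The matrix $P$ is symmetric of size $n_o \times n_o$, contributing $\frac{n_o(n_o+1)}{2}$ free scalars; the gain-related variable $Y$ contributes $2 n_o^2$ scalars (its shape being dictated by the block structure $\tilde A, \tilde B$ in~\eqref{eqn:dyn}, where the state dimension is $2n_o$ and $Y = L_c P$ has the appropriate dimensions); and $\gamma$ contributes one more. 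The statement as written groups $\gamma$ with the implicit "+1" absorbed or treated as not affecting the dominant bound — in any case $n_{P,Y,\gamma} = \frac{n_o(n_o+1)}{2} + 2n_o^2$ is the quantity that plays the role of the "Helly dimension" $d$ in the scenario bound.

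Third, I would invoke the exact feasibility theorem of \cite{campi2008exact}: for a convex scenario program with $d$ decision variables, if $N$ satisfies $\sum_{i=0}^{d-1}\binom{N}{i}\varepsilon^i(1-\varepsilon)^{N-i}\le\beta$, then with probability at least $1-\beta$ over the i.i.d. sample $(\xi_i)_{i=1}^N$, the optimal solution $(\hat P,\hat Y,\hat\gamma)$ of the SCP satisfies $\mathbb{P}[\xi\in\Xi : f(\hat P,\hat Y,\hat\gamma,\xi)>0]\le\varepsilon$. Substituting $d = n_{P,Y,\gamma}$ and matching notation (the violation event phrased over $\delta\in\Delta$ in~\eqref{eq:guarantee} being the marginal of the event over $\xi\in\Xi$) yields exactly~\eqref{eq:bound2} and completes the argument.

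The main obstacle — really the only non-mechanical point — is justifying the decision-variable count and, relatedly, confirming that the structural constraints $\tilde A(\xi) = \begin{bmatrix} A_*(\xi) & V(\xi)\\ 0 & 0\end{bmatrix}$, $Y = L_c P$ do not secretly reduce or alter the effective dimension entering the bound; one must argue that the scenario theorem applies to the problem as posed in the free variables $(P,Y,\gamma)$ regardless of the internal sparsity pattern, so that $d$ is simply the number of independent scalar entries of $(P,Y,\gamma)$. A secondary technical caveat worth a sentence is the non-degeneracy / uniqueness requirement of \cite{campi2008exact}: if one does not want to assume it, one should cite the fully general version (with the sampled-and-discarded or support-rank refinements) so that the bound~\eqref{eq:bound2} still holds verbatim. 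Everything else is a direct quotation of the cited result.
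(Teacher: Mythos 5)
Your proposal is correct and follows essentially the same route as the paper: the paper's proof likewise just checks the well-posedness/uniqueness hypothesis of the scenario feasibility theorem and then invokes it directly (citing \cite{calafiore2006scenario}, though the binomial-tail bound in (\ref{eq:bound2}) is the exact form from \cite{campi2008exact}, which you cite). Your additional remarks on convexity, tie-breaking, and the decision-variable count merely make explicit what the paper's one-line proof leaves implicit.
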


\begin{proof}
For each $i\in \{1,\ldots,N\}$, we have that SCP
\begin{equation}
\begin{cases}\begin{aligned}
&  \underset{P, \; Y , \; \gamma}{\text{min}}
& & \gamma \\
& \text{s.t.}
& & f(P,Y,\gamma, \xi_i) \leq 0.
\end{aligned}
\end{cases}
\end{equation}
is either unfeasible or, if feasible, attains a unique optimal solution. Hence, the results hold by application of \cite[Theorem 1]{calafiore2006scenario}.
\end{proof}

The given result presents bounds on the number of data to ensure some guarantees on the constraint violation probability, that is, guarantees on the feasibility of an optimal solution. The connection between an RCP's and an SCP's optimal values is investigated in the next section. 

\subsection{{Optimal solution}}

In the following, we provide probabilistic guarantees on the mismatch between the optimal solutions of the RCP and a strengthened version of the SCP and give a sufficient number of samples to ensure the desired precision on this mismatch by following the approach in \cite{esfahani2014performance}. Indeed, while the result in Theorem \ref{thm:1} does not require any constraints on the decision variables $P$ and $Y$, the approach in \cite{esfahani2014performance} imposes additional constraints on the decision variables $P$ and $Y$ to measure the mismatch between the optimal solutions of the RCP and the SCP. 

Let us first define a constrained version of the RCP in (\ref{eq:RCP}) as follows:
\begin{equation}
\label{eq:RCPgamma}
\text{RCP}:\begin{cases}\begin{aligned}
& \underset{P, \; Y, \; \gamma}{\text{min}}
& & \gamma \\
& \text{s.t.}
& & O \leq P \leq \mu I \\
& 
& & 
-\mu I \leq Y \leq \mu I \\
& 
& & f(P,Y, \gamma, \xi) \leq 0, \; \xi \in \Xi 
\end{aligned}
\end{cases}
\end{equation}
Where $\mu >0$ is a design parameter. Now, consider $\alpha \geq 0$ and the following SCP:

\begin{equation}
\label{eq:SCPgamma}
\text{SCP}:\begin{cases}\begin{aligned}
&  \underset{P, \; Y , \; \gamma}{\text{min}}
& & \gamma \\
& \text{s.t.}
& & O \leq P \leq \mu I \\
& 
& & - \mu I \leq Y \leq \mu I \\
& 
& & f(P,Y, \gamma, \xi_i)+\alpha \leq 0, \; i=1,\dots,N.
\end{aligned}
\end{cases}
\end{equation}

We are now ready to state the result showing how the choose the number of samples $N$ and the parameter $\alpha$ in order to achieve a desired mismatch between the solutions of the RCP in (\ref{eq:RCPgamma}) and the SCP in (\ref{eq:SCPgamma}). Before presenting the result, we first have the following auxiliary lemmas, for which the proof can be found in the appendix.

\begin{lemma}\label{lemma:1}
    The constraint function $f(P,Y, \gamma, \xi_i)$ in (\ref{eq:RCP}) described by the maps $f_1(P,Y, \gamma, \xi)$ for (\ref{eq:opt_problem2_1}), $f_2(P,Y, \gamma, \xi_i)$ for (\ref{eq:opt_problem2_2}) and $f_3(P,Y, \gamma, \xi)$ for (\ref{eq:opt_problem2_3}) is Lipschitz continuous in $\xi$ uniformly in $P,\, Y ,\, \gamma$ with a Lipschitz constant $L=2\mu (L_A+L_B)\max(1,(|
\sin(\theta)|+|\cos(\theta)|))$, where $L_A$ and $L_B$ are the Lipschitz constants\footnote{The computation of the Lipschitz constants for the maps $\xi \mapsto \tilde{A}(\xi), \tilde{B}(\xi)$ can be done by resorting to existing tools in the literature \cite{darup2018fast,jerray2021orbitador}.} of the maps $\xi \mapsto \tilde{A}(\xi), \tilde{B}(\xi)$ defined in (\ref{eqn:dyn}).  
\end{lemma}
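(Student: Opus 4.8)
The plan is to establish Lipschitz continuity of the constraint function $f$ in $\xi$, uniformly over the feasible region of the decision variables $(P,Y,\gamma)$, by decomposing $f = \max(f_1,f_2,f_3,\ldots)$ into its component LMI maps and bounding each one separately. First I would recall that on the constraint set of the RCP in (\ref{eq:RCPgamma}) we have the box bounds $0 \preceq P \preceq \mu I$ and $-\mu I \preceq Y \preceq \mu I$, so that $\|P\| \le \mu$ and $\|Y\| \le \mu$. Then, for the map $X(\xi) = \tilde{A}(\xi)P - \tilde{B}(\xi)Y$, I would write, for two parameter values $\xi, \xi'$,
\begin{align*}
\|X(\xi) - X(\xi')\| &\le \|\tilde{A}(\xi) - \tilde{A}(\xi')\|\,\|P\| + \|\tilde{B}(\xi) - \tilde{B}(\xi')\|\,\|Y\| \\
&\le \mu\,(L_A + L_B)\,\|\xi - \xi'\|,
\end{align*}
using the Lipschitz constants $L_A, L_B$ of $\xi \mapsto \tilde{A}(\xi), \tilde{B}(\xi)$ from (\ref{eqn:dyn}).

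Next, I would propagate this bound through each of the three structured matrices in (\ref{eq:opt_problem2_1})--(\ref{eq:opt_problem2_3}). For $f_1$, the relevant matrix is $X(\xi) + X(\xi)^T + 2\alpha P$; since $\xi \mapsto 2\alpha P$ does not depend on $\xi$, the variation is that of $X(\xi) + X(\xi)^T$, which is bounded in spectral norm by $2\|X(\xi) - X(\xi')\| \le 2\mu(L_A+L_B)\|\xi-\xi'\|$. For $f_2$, the $2\times 2$ block matrix has diagonal blocks $-rP$ (constant in $\xi$) and off-diagonal blocks $X(\xi)$ and $X(\xi)^T$; the spectral norm of the difference of two such block matrices is controlled by $\|X(\xi)-X(\xi')\|$ up to the same constant, giving again a factor $2\mu(L_A+L_B)$ (the block structure at most doubles the single-block bound, matching the $2$ already present). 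For $f_3$, the block entries are $s\theta\,(X(\xi)+X(\xi)^T)$ and $c\theta\,(X(\xi)-X(\xi)^T)$, so the extra factor $|\sin\theta| + |\cos\theta|$ enters, explaining the $\max(1, |\sin\theta|+|\cos\theta|)$ term in the claimed constant $L = 2\mu(L_A+L_B)\max(1,(|\sin\theta|+|\cos\theta|))$. Finally, since $f$ is the pointwise maximum of $f_1, f_2, f_3$ (translated so that the LMIs read $f_\bullet \le 0$, with the eigenvalue/$\gamma$-shift terms being $\xi$-independent), and the maximum of Lipschitz functions is Lipschitz with the largest of the constants, and because the least-eigenvalue (or the operator-norm-to-scalar) map used to convert an LMI $\Psi(\xi) \preceq 0$ into a scalar constraint is $1$-Lipschitz with respect to the spectral norm of its argument, the overall Lipschitz constant of $f$ in $\xi$ is bounded by $L$ uniformly in $(P,Y,\gamma)$.

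The main obstacle I anticipate is being precise about how each LMI in (\ref{eq:opt_problem2}) is encoded as a scalar-valued map $f_i(P,Y,\gamma,\xi)$ — for instance via the maximum eigenvalue of the left-hand-side matrix minus $\gamma I$, or via a support-function/operator-norm reformulation — and verifying that this encoding is genuinely $1$-Lipschitz in the matrix argument (in spectral norm) so that the composition bounds go through cleanly; one must also confirm that the $\gamma I$ and $2\alpha P$ shift terms, as well as the $-P \preceq \gamma I$ constraint, contribute nothing to the $\xi$-variation. A secondary technical point is the bookkeeping of the constant factor for the $2\times 2$ block matrices in $f_2$ and $f_3$: one should check that passing from $\|X(\xi)-X(\xi')\|$ to the spectral norm of the corresponding block matrix does not introduce a further multiplicative constant beyond what is already absorbed into the stated $L$ (this uses that for a block matrix with off-diagonal blocks $N$ and $N^T$ and zero — or $\xi$-independent — diagonal, the spectral norm is at most $\|N\|$ plus the diagonal contribution, which cancels in the difference). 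Once these points are nailed down, assembling the uniform Lipschitz constant is routine.
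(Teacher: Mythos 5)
Your proposal is correct and follows essentially the same route as the paper's appendix proof: bound $\|X(\xi_1)-X(\xi_2)\|$ using $\|P\|\le\mu$, $\|Y\|\le\mu$ and the Lipschitz constants $L_A,L_B$ of $\xi\mapsto\tilde{A}(\xi),\tilde{B}(\xi)$, propagate this through each of the three LMI blocks, and take the maximum of the resulting constants. The only cosmetic difference is that for the second constraint the paper obtains the tighter constant $\mu(L_A+L_B)$ (the spectral norm of the anti-diagonal block matrix with blocks $N$ and $N^{T}$ equals $\|N\|$), whereas you use $2\mu(L_A+L_B)$; this does not affect the final constant $L$ since the first constraint already contributes $2\mu(L_A+L_B)$.
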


\begin{theorem}
\label{thm:SP}
Consider $\beta \in (0,1]$ and $\varepsilon \in [0,1]$ and assume the number of samples $N$ for the SCP in (\ref{eq:SCPgamma}) satisfies
\begin{equation}
\label{eqn:sample_size}
   N \geq N((\frac{\varepsilon}{L_\xi})^{n_{\xi}},\beta).
\end{equation}
where $N(.,.)$ is defined in (\ref{eq:bound2}) and $L_{\xi}$ is the Lipschitz constant of the map $(.,.,\xi) \mapsto f(P,Y,\gamma,\xi)$ with respect to $(P,Y,\gamma)$. If the parameter $\alpha$ of the SCP in (\ref{eq:SCPgamma}) is chosen as $\alpha=L_{\xi} \varepsilon^{\frac{1}{n_{\xi}}}$, then the optimal value $J^*$ for the optimization problem (\ref{eq:RCPgamma}), and $J^*_N$ the optimal value for the optimization problem (\ref{eq:SCPgamma}) satisfy with a confidence $1-\beta$, $$||J^*-J^*_N|| \leq \varepsilon.$$
\end{theorem}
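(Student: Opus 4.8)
The plan is to derive Theorem~\ref{thm:SP} by combining the Lipschitz regularity of $f$ established in Lemma~\ref{lemma:1} with the feasibility bound of Theorem~\ref{thm:1}, following the template of \cite{esfahani2014performance}. The core idea is that the scenario constraint tightening by $\alpha$ compensates exactly for the ``gap'' between a point $\xi$ that is violated by the SCP solution and the nearest sampled scenario. So the first step is to invoke Theorem~\ref{thm:1} applied to the tightened SCP in (\ref{eq:SCPgamma}): choosing $N \geq N((\varepsilon/L_\xi)^{n_\xi},\beta)$ guarantees, with confidence $1-\beta$, that the violation probability of the tightened constraint set $\{\xi : f(\hat P,\hat Y,\hat\gamma,\xi)+\alpha > 0\}$ is at most $(\varepsilon/L_\xi)^{n_\xi}$.

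Next I would translate this small-probability statement into a deterministic covering statement. Because $\Xi$ is compact and $f$ is Lipschitz in $\xi$ with constant $L_\xi$ uniformly in $(P,Y,\gamma)$ (Lemma~\ref{lemma:1}), a violation-probability bound of $(\varepsilon/L_\xi)^{n_\xi}$ forces every $\xi \in \Xi$ to lie within distance $\varepsilon/L_\xi$ of a point where the tightened constraint holds — otherwise a ball of radius $\varepsilon/L_\xi$ around a persistently violating point would carry probability exceeding $(\varepsilon/L_\xi)^{n_\xi}$, contradicting the bound (this is where the exponent $n_\xi$ and an implicit lower-bound assumption on the density of $\mathbb{P}$ enter). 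Then, for any $\xi\in\Xi$, pick such a nearby good point $\xi'$; Lipschitz continuity gives $f(\hat P,\hat Y,\hat\gamma,\xi) \leq f(\hat P,\hat Y,\hat\gamma,\xi') + L_\xi\|\xi-\xi'\| \leq -\alpha + L_\xi(\varepsilon/L_\xi) = 0$, using $\alpha = L_\xi \varepsilon^{1/n_\xi}$ — wait, more carefully, with $\alpha = L_\xi\varepsilon^{1/n_\xi}$ and radius $\varepsilon^{1/n_\xi}$ the cancellation is exact. Hence $(\hat P,\hat Y,\hat\gamma)$ is feasible for the full RCP in (\ref{eq:RCPgamma}), so $J^* \leq J^*_N$.

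For the reverse inequality, I would observe that since the tightened SCP only strengthens the constraints of the (untightened) SCP, and the untightened SCP is a relaxation of the RCP, we have $J^*_N \leq J^*$ would be false in general — rather one shows $J^*_N \le J^* + \text{(something)}$. The standard argument: the RCP optimizer is feasible for a slightly shifted problem, and the objective $\gamma$ appears linearly, so the $\alpha$-tightening perturbs the attainable objective by at most a quantity controlled by $\alpha$ and the Lipschitz/scaling structure of the constraints in $\gamma$; combined with $J^* \le J^*_N$ this sandwiches $\|J^* - J^*_N\| \le \varepsilon$. I expect the main obstacle to be making precise the probability-to-covering step: it requires that $\mathbb{P}$ assign at least volume-proportional mass to small balls in $\Xi$ (a non-degeneracy/lower-density hypothesis that \cite{esfahani2014performance} builds in), and one must be careful that the exponent $n_\xi$ in (\ref{eqn:sample_size}) is precisely the dimension of $\Xi$ so the ball-volume scaling $(\text{radius})^{n_\xi}$ matches. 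The rest — invoking Theorem~\ref{thm:1} and chaining Lipschitz inequalities — is routine once that geometric lemma is in place.
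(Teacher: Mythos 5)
Your overall strategy is essentially an unrolling of the argument that the paper outsources entirely to \cite[Theorem 4.3]{esfahani2014performance}: the paper's own proof merely verifies the hypotheses of that theorem (Lipschitz continuity via Lemma~\ref{lemma:1}, convexity of $f$ in the decision variables, and the uniformity function $g(\varepsilon)=\varepsilon^{n_\xi}$ for the uniform measure) and then cites it together with Remarks 3.5 and 3.9. Your reconstruction of the first direction is correct in spirit: apply Theorem~\ref{thm:1} to the tightened program (\ref{eq:SCPgamma}), convert the small violation probability of the $\alpha$-tightened constraint into robust feasibility of $(\hat P,\hat Y,\hat\gamma)$ for (\ref{eq:RCPgamma}) via the ball/uniformity argument, and conclude $J^*\leq J^*_N$. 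However, your own ``wait, more carefully'' aside is never resolved: for the cancellation $-\alpha+L\cdot r=0$ with $\alpha=L\varepsilon^{1/n_\xi}$ the covering radius must be $r=\varepsilon^{1/n_\xi}$, whose mass under $g(r)=r^{n_\xi}$ is $\varepsilon$, while the level you fed into Theorem~\ref{thm:1} is $(\varepsilon/L_\xi)^{n_\xi}$; these coincide only up to a conservative inequality and the bookkeeping must be made explicit. You also use $L_\xi$ throughout as the Lipschitz constant in $\xi$ supplied by Lemma~\ref{lemma:1}, whereas the theorem statement defines $L_\xi$ as the Lipschitz constant with respect to $(P,Y,\gamma)$; both constants are in fact needed, in different places, and conflating them hides where each enters.

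The genuine gap is the reverse inequality $J^*_N\leq J^*+\varepsilon$. You label it ``the standard argument'' and assert that the $\alpha$-tightening perturbs the attainable objective by a quantity controlled by $\alpha$, but that assertion is precisely what must be proved, and it is the step where the Lipschitz constant of $f$ with respect to the decision variables and \cite[Remark 3.5]{esfahani2014performance} are actually used. Concretely, one must exhibit a point feasible for (\ref{eq:SCPgamma}) whose objective value is at most $J^*+\varepsilon$; the natural candidate is a perturbation of the RCP optimizer that gains slack $\alpha$ in every constraint (here each LMI has the form $F(P,Y,\xi)\prec\gamma I$, so replacing $\gamma^*$ by $\gamma^*+\alpha$ works), after which one still must verify that the induced increase of the objective does not exceed $\varepsilon$ --- which, with $\alpha=L_\xi\varepsilon^{1/n_\xi}$, is not automatic and requires exactly the constant relations you left implicit, as well as compatibility with the box constraints $O\leq P\leq\mu I$ and $-\mu I\leq Y\leq\mu I$. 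As written, your argument establishes only the one-sided bound $J^*\leq J^*_N$, not $\|J^*-J^*_N\|\leq\varepsilon$.
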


\begin{proof}
We prove the result by using~\cite[Theorem 4.3]{esfahani2014performance}. We first use the result of Lemma \ref{lemma:1} to compute the Lipschitz constant $L_{\xi}$ for the map $(.,.,.,\xi) \mapsto f(P,Y,\gamma,\xi)$ with respect to $(P,Y,\gamma)$. Moreover, the map $(P,Y,\gamma,\xi) \mapsto f(P,Y,\gamma,\xi)$ is linear and then convex with respect to $(P,Y,\gamma)$. Since the distribution on the set $\Xi$ to generate the scenarios $\{\xi_1,\xi_2,\ldots,\xi_N\}$ is uniform, we choose the map $g(\varepsilon)=\varepsilon^{n_{\xi}}$.
The result follows then from an application of~\cite[Theorem 4.3]{esfahani2014performance},~\cite[Remark 3.9]{esfahani2014performance} and~\cite[Remark 3.5]{esfahani2014performance}.
\end{proof}

Intuitively, the result of Theorem \ref{thm:SP} states that if the number of samples is chosen according to (\ref{eq:bound2}), then with confidence $(1-\beta)$ the optimal value $J^*$ for the optimization problem (\ref{eq:RCPgamma}), and $J^*_N$ the optimal value for the optimization problem (\ref{eq:SCPgamma}) satisfy $||J^*-J^*_N|| \leq \varepsilon$.

\section{Numerical Results}
This section presents the simulation results of the proposed control scheme. The considered example system \cite{caldas2015object} is a planar hand with two fingers, 3 degrees of freedom each, manipulating a rectangular object with uncertain geometry (Fig.\ref{fig:translation}). It is consistent with assumptions A1 to A5 from Section~\ref{sec:model}. Although simplified compared to the full 3D case, this setup is sufficient to illustrate the control laws' performance. Simulations are done using Matlab 2022a and the optimization problem is formulated with the YALMIP toolbox using the SeDuMi 1.3 \cite{lofberg2004yalmip}. 

The control objective is the simultaneous translation and reorientation of the object in the plane, with significant amplitudes at the scale of the object/hand dimensions (translation of 40\unit{\milli\meter} in the x-direction, rotation of $\approx11\deg$). 

\subsection{Considered controllers}
To demonstrate the performance of the proposed method, we design three different controllers: $C^{\Delta}$, $C_{feas}^{\Xi}$ and $C_{opt}^{\Xi}$:

\begin{itemize}
    \item The controller $C^{\Delta}$ is designed by following the discretization approach of \cite{caldas2015object,caldas2022task} which considers the set of uncertainties on the grasp map. The set of uncertain systems (\ref{eq:grasp_uncertain}) is constructed based on the set $\Delta$ of uncertain parameters only. Then, the optimization problem (\ref{eq:opt_problem2}) is solved by taking 46 samples from a gridding of the uncertain set $\Delta$. 
    
    \item For the $C^{\Xi}$ scenario-based controllers proposed in this paper, the extended uncertainty set is considered to take into account different operating points. In this case, the set of uncertainties is constructed based on the extended uncertain set $\Xi$ that consists of uncertainties on the grasp map \emph{and} operating points. The optimization problem (\ref{eq:opt_problem2}) is solved by taking random samples from the uncertain set $\Xi$. However, in this case, we exploit Theorem 1 and 2 to determine the number of samples for ensuring probabilistic guarantees on the solution of the optimization problem from the feasibility (Theorem 1) and the optimality (Theorem 2) perspectives, leading to respectively two solutions $C_{feas}^{\Xi}$ and $C_{opt}^{\Xi}$.
\end{itemize}

\subsection{Considered uncertainties}
We consider the translation error (Figure \ref{fig:translation}) for contact point uncertainties. We assume that the direction of the contact force is known, but the location of the contact point is assumed to be uncertain, so for this case, the grasp map becomes:
\begin{equation}
G=\begin{pmatrix}
    0&-1&0&1\\
    1&0&-1&0\\
    r_0&0&r_0&\delta
\end{pmatrix}
\end{equation}
where $r_0=17.5$\unit{\milli\metre} is the length of the rectangular object and $\delta \in [-4\unit{\milli\metre},5\unit{\milli\metre}]$ represents the translation uncertainty of the second contact point.
\begin{figure}[htb]
\centering
\includegraphics[width=0.8\columnwidth]{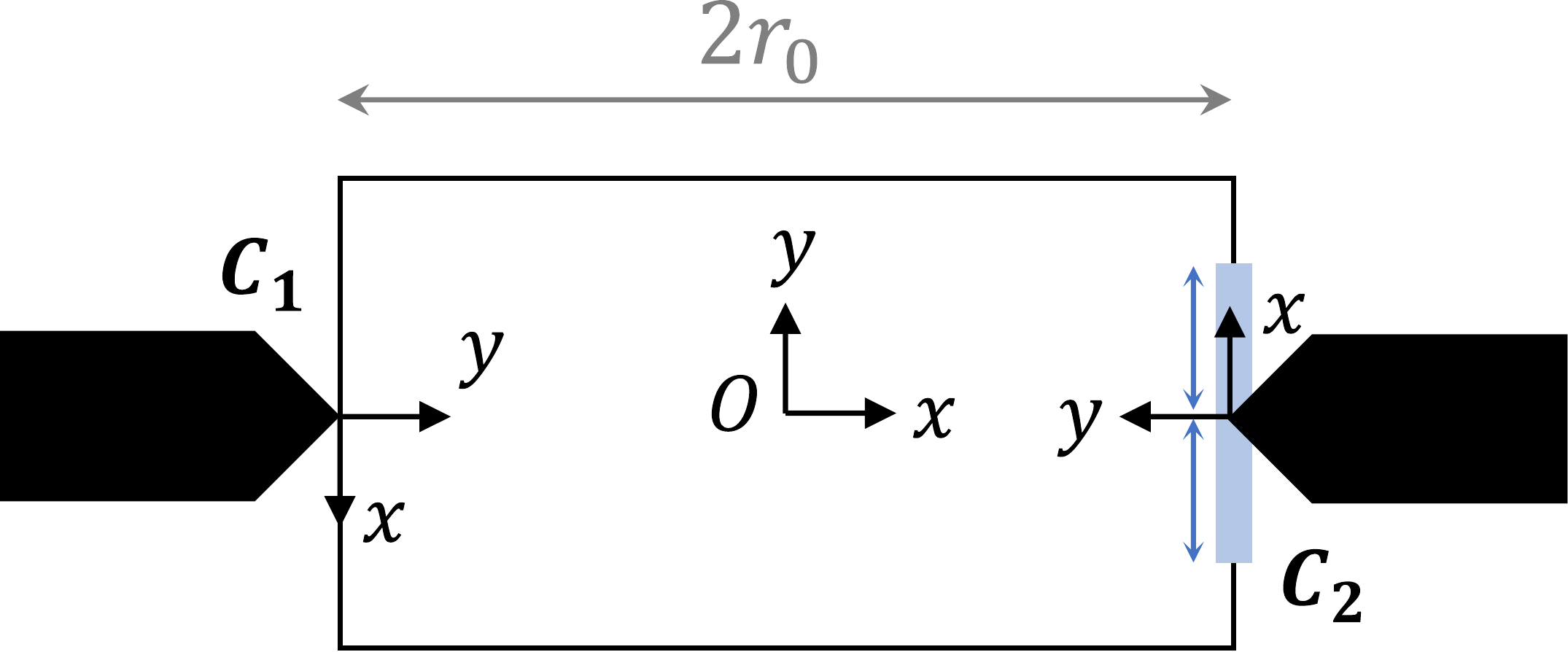}
\caption{Geometry of the manipulated planar object. Contact point C2's position is uncertain (blue region). Object's pose is locally parametrized by $x_o = [P_x\ P_y\ P_{\theta}]^T$.}
\label{fig:translation}
\end{figure}

The equilibrium point for the $C^{\Delta}$ discretized controller's design is chosen as $q_{eq}=\begin{bmatrix}0.9250&1.1170&0.9250&1.0647
\end{bmatrix}^T$\unit{\radian} and $x_{o_{eq}}=\begin{bmatrix}-17.5&66.5&0\end{bmatrix}^T$\unit{\milli\metre}. 

For the proposed $C^{\Xi}$ scenario-based controllers, we consider the same equilibrium point as before and we assume that the operating point for the joint angles varies in $q_{*}=[0.6632,0.9250]\times [1.1170,1.7453]\times [0.6632,0.9250]\times [1.0472,1.7453]$\unit{\radian} and the object position varies in $y$ coordinate within $[36.5,66.5]$\unit{\milli\metre}.

\subsection{Control design}
The motion control is designed to cluster the closed-loop poles in the $\mathscr{D}$-region, which is shaped by the following parameters:
\begin{itemize}
    \item $\alpha=0.5$ which ensures $Re(pole)<-0.5$, providing a stability constraint and minimal dynamics.
    \item $\theta=30$, which guarantees a minimum damping ratio $\zeta \simeq 0.86$.
    \item $r = 7$, which sets the maximal dynamics.
\end{itemize}

\vspace{5pt}
\noindent \textbf{Design of $C_{feas}^{\Xi}$ :} The number of samples that are required to ensure probabilistic guarantees on the feasibility of the solution of SCP are obtained through Theorem \ref{thm:1}. We fix the level parameter $\epsilon=0.5$ and the confidence $\beta=10^{-3}$, then using the bound (\ref{eq:bound2}), for the number of decision variables $n_{P,Y, \gamma}=39$, the required sample size to ensure the probabilistic guarantee in (\ref{eq:guarantee}) is obtained as $N=111$. Then, we solve the optimization problems (\ref{eq:opt_problem2}) by taking $N$ samples from the extended uncertainty set $\Xi$. 

\noindent \textbf{Design of $C_{opt}^{\Xi}$ :} To guarantee that the difference between the optimal values of the optimization problem and the optimal value of the scenario optimization problem is below some $\varepsilon$ with probability $1-\beta$, we need to use Theorem \ref{thm:SP}. According to Theorem \ref{thm:SP}, we need to compute the Lipschitz constants of the constraints. Thus, the Lipschitz constants are obtained as $L_{\xi_1}=7.4713$, $L_{\xi_2}=8.0188$, $L_{\xi_3}=2.7833$ using Lemma \ref{lemma:1}. One drawback of this approach is that the bound on the required number of samples (\ref{eqn:sample_size}) results in a high number of samples to ensure guarantees on the optimality. Since the system is more sensitive to joint angle variations, we can restrict the uncertain set $\Xi$ by fixing $\tau_{eq}$, the grasp map parameter $\delta$, and the object position $x_{o_{eq}}$. This way, the number of uncertain parameters to be sampled is reduced to $n_{\xi}=4$. We also fix the level parameter $\epsilon=0.99$ and the confidence $\beta=0.999$. Thus, the number of samples required for the optimality guarantees is obtained as $N=111714$. Due to Matlab limitations, only $N=16607$ samples have been used. Let us note that there is a tradeoff between the choice of the parameters $\beta$ and $\varepsilon$ and the computational complexity of the problem. Specifically, smaller values of  $\beta$ and $\varepsilon$ require a larger number of samples $N$, which increases the computational demands of solving the SCP problem in (\ref{eq:SCPgamma}). In practice, one should begin by considering the available computational resources, then determine the number of samples $N$ that can be feasibly solved, and subsequently identify the achievable values of the parameters $\beta$ and $\varepsilon$.

\subsection{Results}

The designed controllers are evaluated in closed-loop simulation with the non-linear hand-object system (\ref{eq:NLss}) with (\ref{eq:NLtransform}) and (\ref{eq:ctrl}). The performance of the controllers is tested for trajectories starting at two different initial points with zero velocities (IC stands for "initial conditions"):
\begin{itemize}
    \item IC1 : $q^1_0=\begin{bmatrix}0.9250&1.1170&0.9250&1.0647\end{bmatrix}^T$ and $x^1_{o_0}=\begin{bmatrix}-17.5&66.5&0\end{bmatrix}^T$, which corresponds exactly to the $C^{\Delta}$-design equilibrium point $q_{eq}$ and $x_{o_{eq}}$.
    \item IC2 : $q^2_0=\begin{bmatrix}0.6807&1.7453&0.6981&1.7453\end{bmatrix}^T$ and $x^2_{o_0}=\begin{bmatrix}-17.5&36.5&0\end{bmatrix}^T$, which belongs to the set of operating points in the $C^{\Xi}$-design.
\end{itemize}

Figures~\ref{fig:positions_trace_ic1_new} and \ref{fig:positions_trace_ic2_new} illustrate the closed-loop tracking performance for respectively the IC1 and IC2 trajectories. For the IC1 condition, all three controllers ensure reasonable tracking performance. For the IC2 condition, the discretized $C^{\Delta}$ fails to ensure stable motion, while the $C^{\Xi}_{opt}$ leads to the most performant tracking in terms of tracking error. The corresponding object motion is represented in Fig.~\ref{fig:strob}, where contact forces within the friction cones show that the contacts are maintained through the motion. The corresponding control torques, not shown here, also remain within physically meaningful intervals.
\vspace{-10pt}
\begin{figure}[htb]
\centering
\includegraphics[height=1.05\columnwidth, angle=-90]{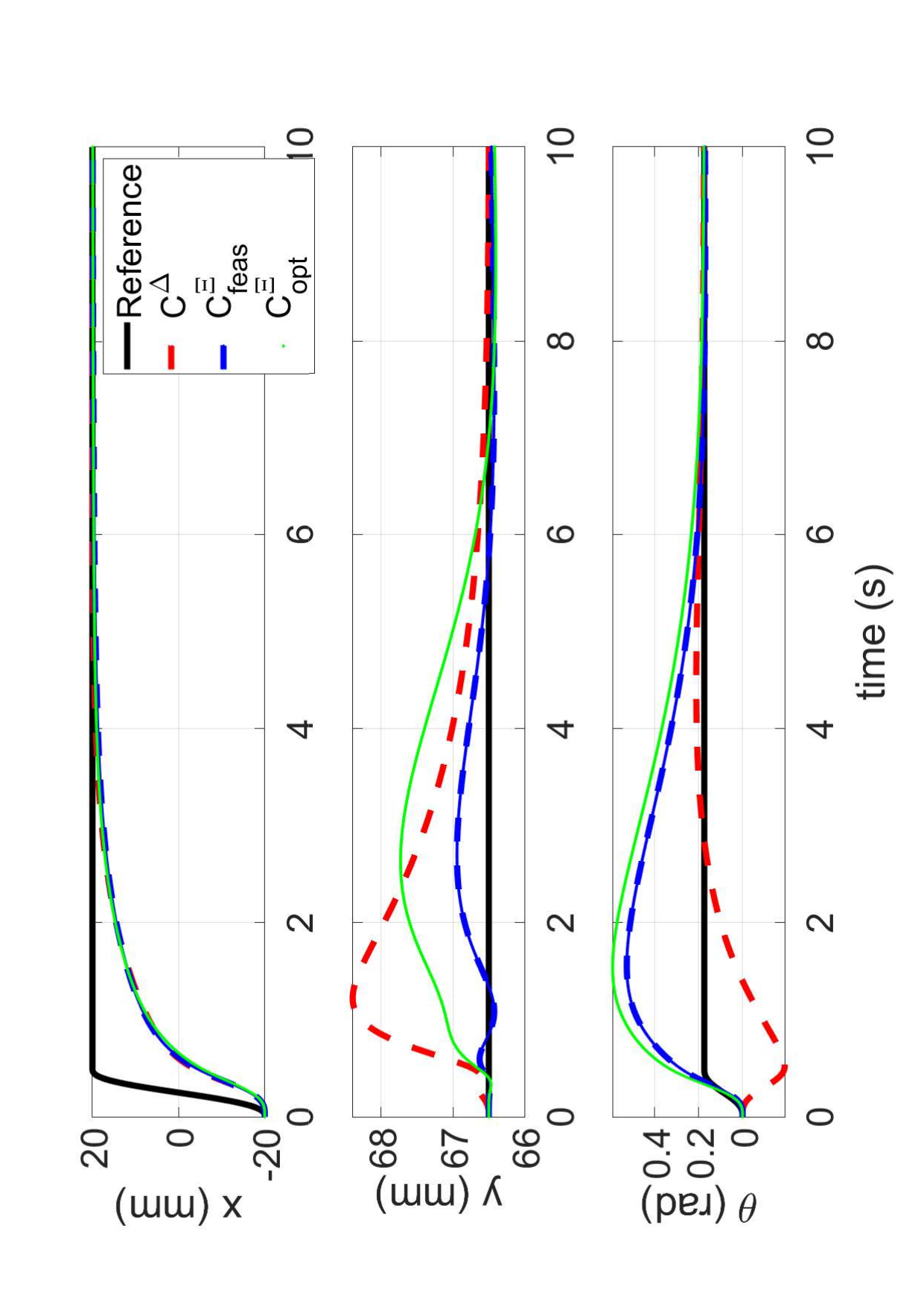}
\caption{IC1 Motion tracking performance of $C^{\Xi}_{feas}$ controller (blue dashed line), $C^{\Xi}_{opt}$ controller (green solid line), compared to the $C^{\Delta}$ controller (red dashed line) for the initial conditions $q^1_0$ and $x^1_{o_0}$ and the reference signals (black solid line).}
\label{fig:positions_trace_ic1_new}
\end{figure} 
\vspace{-10pt}
\begin{figure}[htb]
\centering
\includegraphics[height=1.05\columnwidth, angle=-90]{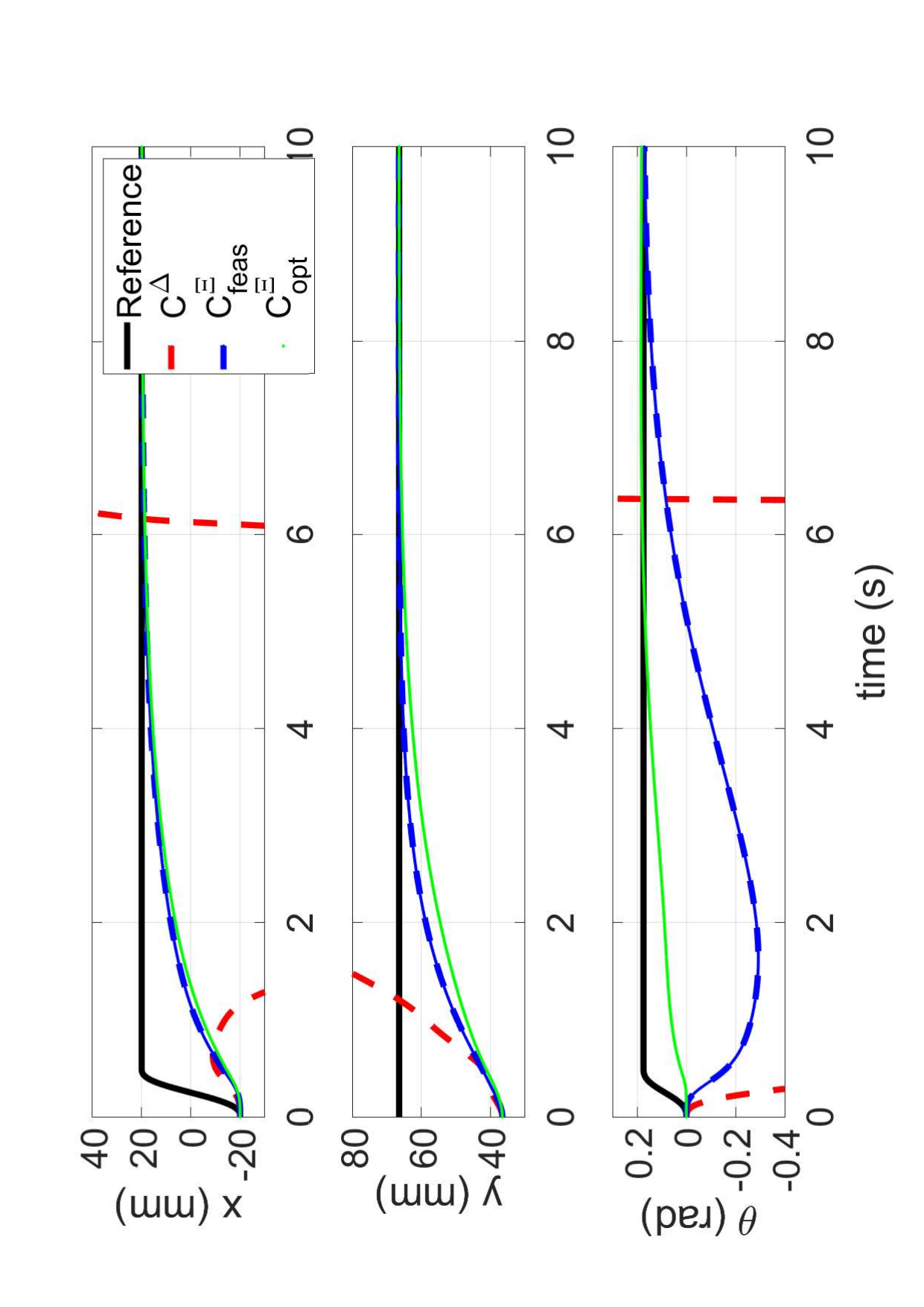}
\caption{IC2 Motion tracking performance of $C^{\Xi}_{feas}$ controller (blue dashed line), $C^{\Xi}_{opt}$ controller (green solid line), compared to the $C^{\Delta}$ controller (red dashed line) for the initial conditions $q^2_0$ and $x^2_{o_0}$ and the reference signals (black solid line).}
\label{fig:positions_trace_ic2_new}
\end{figure} 

\begin{figure}     
         \includegraphics[width=1\columnwidth]{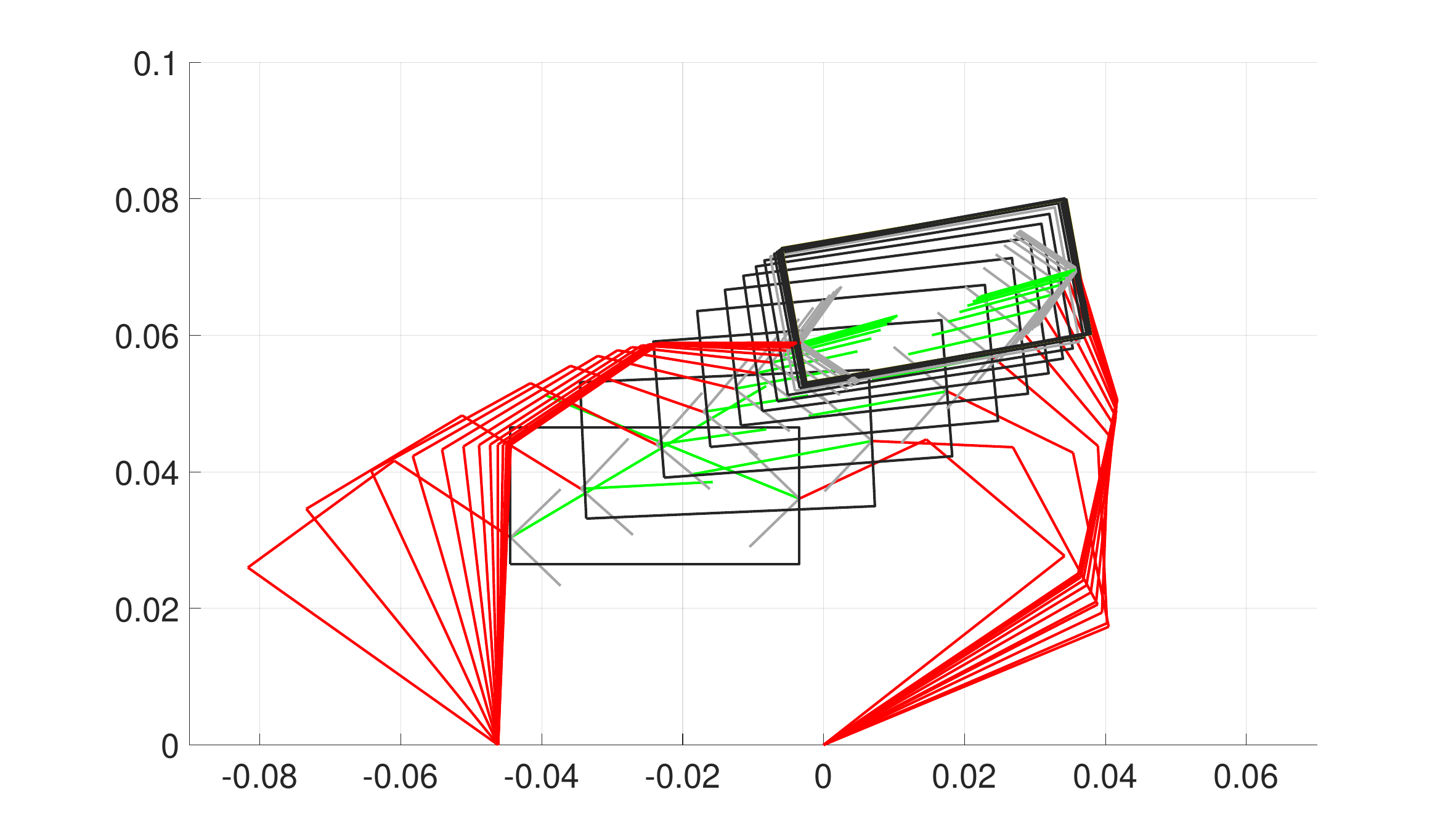}
         \caption{IC2 Stroboscopic view of the system response with the $C^{\Xi}_{opt}$ controller for the initial conditions $q^2_0$ and $x^2_{o_0}$ (corresponding to the green curves in Fig. \ref{fig:positions_trace_ic2_new}. A rectangular object (black lines) is manipulated by two planar three-degree-of-freedom fingers (red lines). Contact forces (green lines) remain inside the friction cones (grey lines). }
        \label{fig:strob}
\end{figure}

Figures~\ref{fig:poles_ic1} and \ref{fig:poles_ic2} further illustrate the closed-loop poles locations along the trajectories in respectively the IC1 and IC2 cases, for the three compared controllers. The closed-loop poles with the optimal scenario controller $C^{\Xi}_{opt}$ display the smallest dispersion for different operating points, thus indicating improved robustness with respect to previous designs. 
In all three cases, it can be observed that LMI constraints are not fulfilled; this is attributed to the fact that the poles are computed based on operating points not considered during the controller design.

\section{CONCLUSIONS}
In this paper, we propose a scenario-based controller design that considers different operating points to overcome the limitations of linear models and takes into account contact point uncertainties to ensure robustness for multi-fingered robot systems. The regional pole placement in the LMI formulation is adopted to achieve this goal and it is formulated as a RCP. To handle the RCP, we reformulate it as a SCP and we rely on scenario optimization to find a required sample size for ensuring probabilistic guarantees on the solution. The required sample size is calculated from a feasibility and optimality perspective.

Simulation results show that when we construct our controller based on a feasible solution of regional pole placement problem, the closed-loop system is limited to operate around the linearization point when no further care is taken about possible operating points. The sensitivity of the closed-loop to the operating point variations can lead to unstable behavior and grasp loss. The RCP formulation of the problem leads to a solution with improved robustness. In future work, we aim at an experimental evaluation of the proposed method.

\begin{figure}[htb]
\centering
\includegraphics[width=1.05\columnwidth]{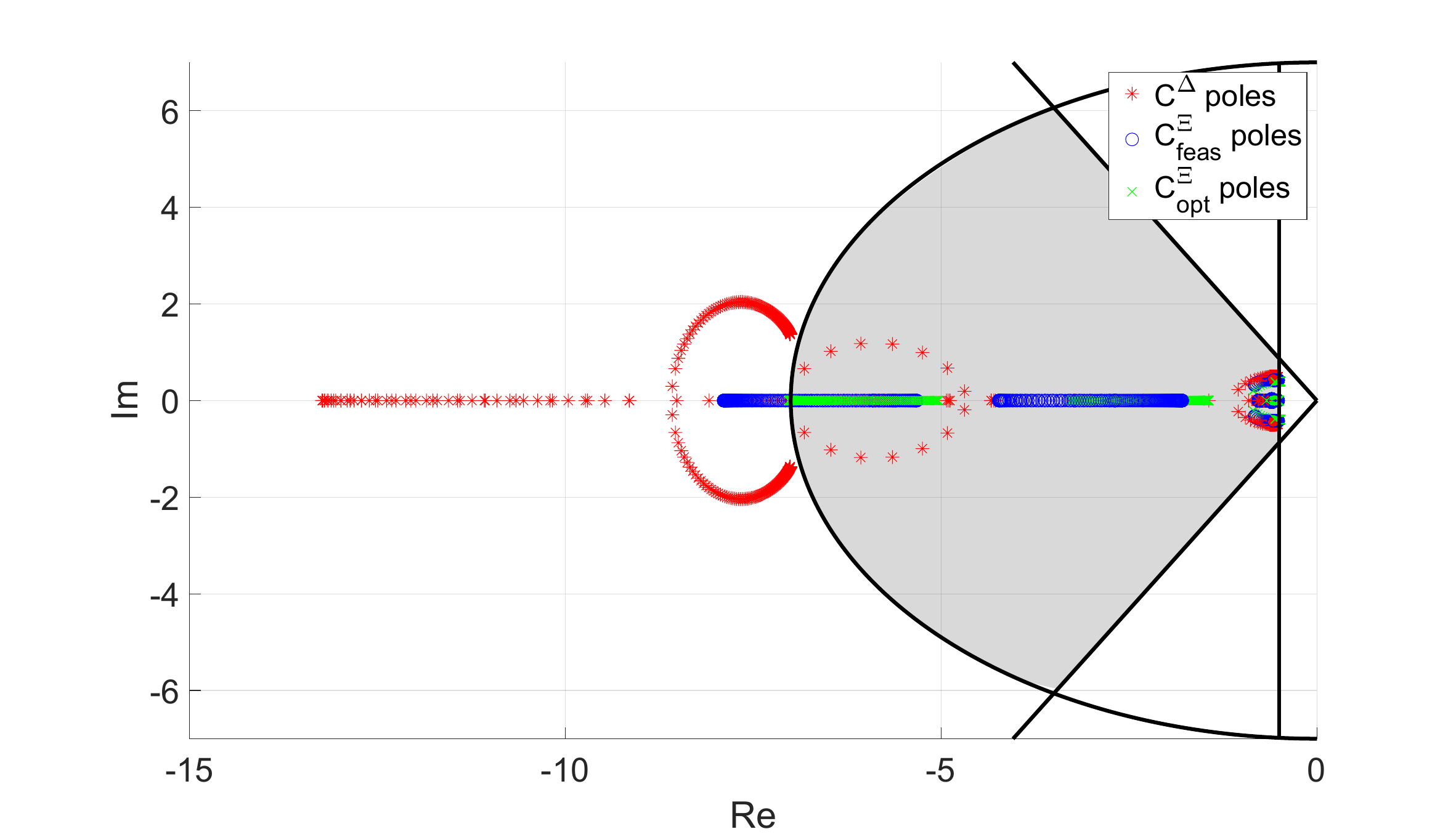}
\caption{Closed-loop poles evaluated on the IC1 test trajectory. With the $C^\Xi_{opt}$ controller design (green), the closed-loop poles remain within the desired D-stability region (grey area).}
\label{fig:poles_ic1}
\end{figure} 

\begin{figure}[htb]
\centering
\includegraphics[width=1.05\columnwidth]{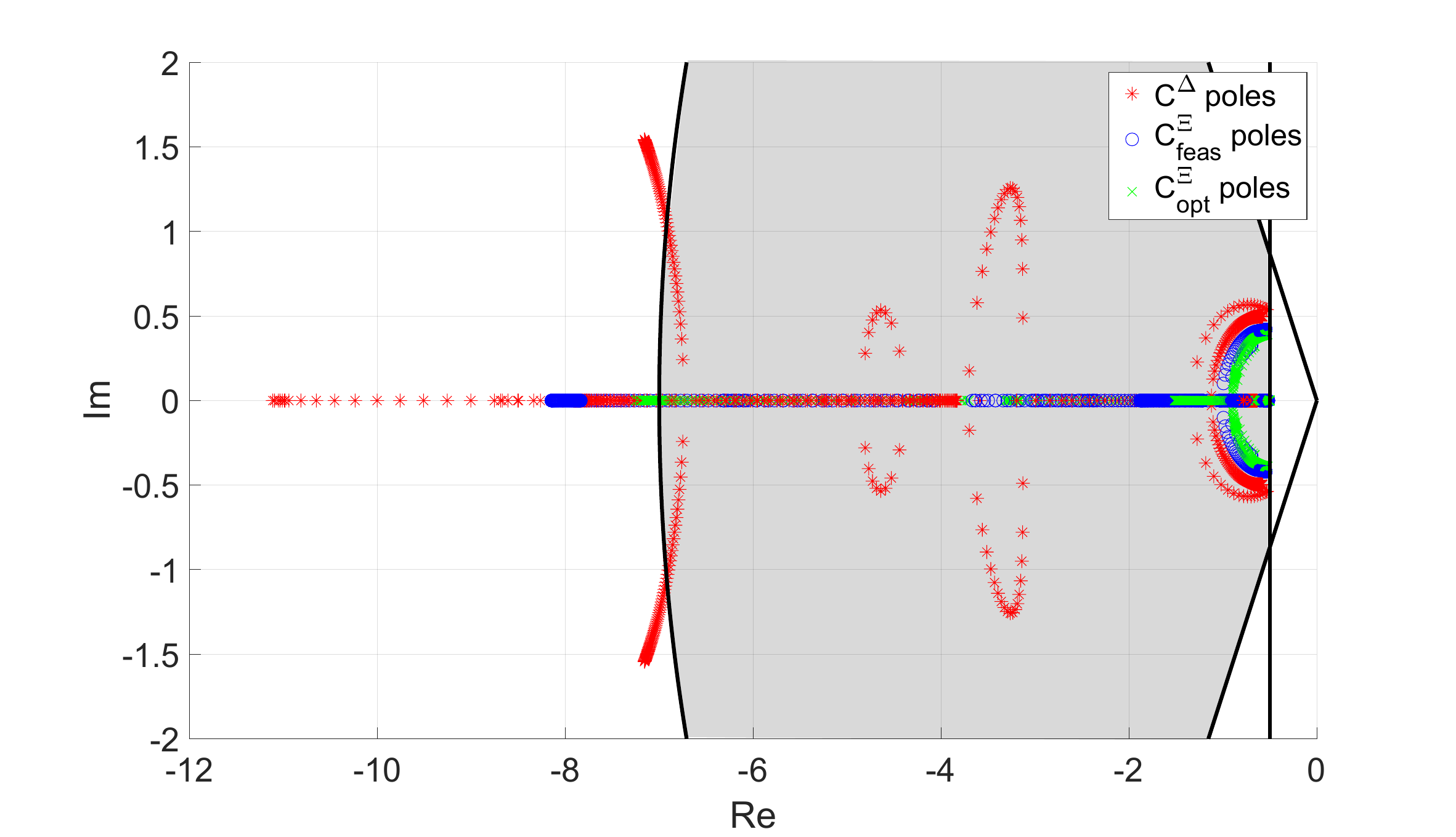}
\caption{Closed-loop poles  evaluated on the IC2 test trajectory. With the $C^\Xi_{opt}$ controller design (green), the closed-loop poles remain within the desired D-stability region (grey area) and present the smallest dispersion for different operating points, indicating improved robustness.}
\label{fig:poles_ic2}
\end{figure}





\section*{APPENDIX}
\label{sec:appendix}





\begin{proof}
    We prove that constraints in (\ref{eq:opt_problem2}) are Lipschitz continuous in $\xi$ uniformly in $P,\, Y ,\, \gamma$. Let us first deal with the first constraint in (\ref{eq:opt_problem2_1}). Consider $\xi_1,\xi_2 \in \Xi$ we have
\begin{align*}
||&\tilde{A}(\xi_1)P-\tilde{B}(\xi_1)Y+P\tilde{A}^T(\xi_1)-Y^T\tilde{B}^T(\xi_1)+2\alpha P\notag\\
&-(\tilde{A}(\xi_2)P-\tilde{B}(\xi_2)Y+P\tilde{A}^T(\xi_2)-Y^T\tilde{B}^T(\xi_2)+2\alpha P)||\notag\\
=&||(\tilde{A}(\xi_1)-\tilde{A}(\xi_2))P-(\tilde{B}(\xi_1)-\tilde{B}(\xi_2))Y\\&+P(\tilde{A}^T(\xi_1)-\tilde{A}^T(\xi_2))-Y^T(\tilde{B}^T(\xi_1)-\tilde{B}^T(\xi_2))||\\
\leq&||(\tilde{A}(\xi_1)-\tilde{A}(\xi_2))+(\tilde{A}^T(\xi_1)-\tilde{A}^T(\xi_2)||.||P||\\&+||(\tilde{B}(\xi_1)-\tilde{B}(\xi_2))+(\tilde{B}^T(\xi_1)-\tilde{B}^T(\xi_2)||.||Y||\\ 
\leq&2 \mu( ||\tilde{A}(\xi_1)-\tilde{A}(\xi_2)||+||\tilde{B}(\xi_1)-\tilde{B}(\xi_2)||)\\
\leq&2 \mu(L_A+L_B)( ||\xi_1-\xi_2||)
\end{align*}
and $L_1=2 \mu(L_A+L_B)$ is a Lipschitz constant of the first constraint in (\ref{eq:opt_problem2_1}).

For the second constraint in (\ref{eq:opt_problem2_2}), and for $\xi_1,\xi_2 \in \Xi$, one gets:
\begin{align*}
&\Bigg|\Bigg|\begin{bmatrix}
-rP&\tilde{A}(\xi_1)P-\tilde{B}(\xi_1)Y\\
P\tilde{A}^T(\xi_1)-Y^T\tilde{B}^T(\xi_1)&-rP
\end{bmatrix}\\& - \begin{bmatrix}
-rP&\tilde{A}(\xi_2)P-\tilde{B}(\xi_2)Y\\
P\tilde{A}^T(\xi_2)-Y^T\tilde{B}^T(\xi_2)&-rP
\end{bmatrix}\,\Bigg|\Bigg|\\  &\Bigg|\Bigg|\left[\begin{matrix}
0\\
P(\tilde{A}^T(\xi_1)-\tilde{A}^T(\xi_2))-Y^T(\tilde{B}^T(\xi_1)-\tilde{B}^T(\xi_2))
\end{matrix} \right. \\ &\qquad\qquad\qquad\left. \begin{matrix}
(\tilde{A}(\xi_1)-\tilde{A}(\xi_2))P-(\tilde{B}(\xi_1)-\tilde{B}(\xi_2))Y\\0
\end{matrix}\right] \Bigg|\Bigg| \\ =& ||(\tilde{A}(\xi_1)-\tilde{A}(\xi_2))P-(\tilde{B}(\xi_1)-\tilde{B}(\xi_2))Y\|| \\
\leq&\mu(L_A+L_B)( ||\xi_1-\xi_2||)
\end{align*}
and $L_2=\mu(L_A+L_B)$ is a Lipschitz constant of the first constraint in (\ref{eq:opt_problem2_2}).

Consider the third constraint in (\ref{eq:opt_problem2_3})
and $\xi_1,\xi_2 \in \Xi$. Using the derivations in (\ref{eqn:lipschitz}) one gets that $L_3=2\mu (L_A+L_B)(|
\sin(\theta)|+|\cos(\theta)|)$ is a Lipschitz constant of the constraint in (\ref{eq:opt_problem2_3}).

\begin{figure*}[t!]
\rule{\textwidth}{0.4pt}
\begin{small}
\begin{align}
\label{eqn:lipschitz}
& \Bigg|\Bigg|\, \begin{bmatrix}
sin(\theta)(X(\xi_1)+X^T(\xi_1))&cos(\theta)(X(\xi_1)-X^T(\xi_1)) \nonumber\\
cos(\theta)(X^T(\xi_1)-X(\xi_1))&sin(\theta)(X(\xi_1)+X^T(\xi_1))
\end{bmatrix}-\begin{bmatrix}
sin(\theta)(X(\xi_2)+X^T(\xi_2))&cos(\theta)(X(\xi_2)-X^T(\xi_2)) \nonumber\\
cos(\theta)(X^T(\xi_2)-X(\xi_2))&sin(\theta)(X(\xi_2)+X^T(\xi_2))
\end{bmatrix} \,\Bigg|\Bigg| \nonumber\\&= \Bigg|\Bigg|\, \begin{bmatrix}
sin(\theta)((X(\xi_1)-X(\xi_2))+(X^T(\xi_1))-X^T(\xi_2))&cos(\theta)((X(\xi_1)-X(\xi_2)-(X^T(\xi_1)-X^T(\xi_2))) \nonumber\\
cos(\theta)((X^T(\xi_1)-X^T(\xi_2))-(X(\xi_1)-X(\xi_2)))&sin(\theta)((X(\xi_1)-X(\xi_2))+(X^T(\xi_2)-X^T(\xi_2)))
\end{bmatrix} \nonumber\\ &\leq \Bigg|\Bigg|\, \begin{bmatrix}
sin(\theta)((X(\xi_1)-X(\xi_2))+(X^T(\xi_1))-X^T(\xi_2))&0\\0&sin(\theta)((X(\xi_1)-X(\xi_2))+(X^T(\xi_2)-X^T(\xi_2)))
\end{bmatrix} \nonumber\\ &+ \Bigg|\Bigg|\, \begin{bmatrix}
0&cos(\theta)((X(\xi_1)-X(\xi_2)-(X^T(\xi_1)-X^T(\xi_2))) \nonumber\\
cos(\theta)((X^T(\xi_1)-X^T(\xi_2))-(X(\xi_1)-X(\xi_2)))&0
\end{bmatrix} \nonumber\\ &\leq ||
sin(\theta)((X(\xi_1)-X(\xi_2))+(X^T(\xi_1))-X^T(\xi_2))||+ ||cos(\theta)((X(\xi_1)-X(\xi_2)-(X^T(\xi_1)-X^T(\xi_2)))|| \nonumber\\ &\leq 2(|
\sin(\theta)|+|\cos(\theta)|).||X(\xi_1)-X(\xi_2))|| \nonumber \\ &\leq 2(|
\sin(\theta)|+|\cos(\theta)|).||(\tilde{A}(\xi_1)P-\tilde{B}(\xi_1)Y)-(\tilde{A}(\xi_2)P-\tilde{B}(\xi_2)Y)|| \nonumber \\ &\leq 2\mu (L_A+L_B)(|
\sin(\theta)|+|\cos(\theta)|)( ||\xi_1-\xi_2||)
\end{align}
\end{small}
\rule{\textwidth}{0.4pt}
\end{figure*}

\end{proof}



\bibliographystyle{IEEEtran}
\bibliography{SCP_conf_references}

\end{document}